\documentclass{article} 


\usepackage{hyperref}
\usepackage{url}

\usepackage{graphicx} 

\usepackage{alltt}
\usepackage{algorithm}

\usepackage{amssymb,amsmath,amsthm}
\usepackage{wrapfig}
\usepackage{booktabs}

\usepackage{xcolor}


\usepackage{textcomp}  
\usepackage{scalerel}  

\usepackage{charter}
\usepackage{eulervm}
\usepackage[bb=stixtwo]{mathalpha}
\usepackage[scaled=0.9]{beramono}


\usepackage[T1]{fontenc}
\usepackage{listings}

%


\makeatletter
\renewcommand*{\verbatim@font}{}
\makeatother

\definecolor{my-green}{HTML}{677d00}
\definecolor{my-light-green}{HTML}{acd373}
\definecolor{my-lighter-green}{HTML}{e6ecce}
\definecolor{my-red}{HTML}{b13e26}
\definecolor{my-light-red}{HTML}{d38473}
\definecolor{my-blue}{HTML}{306693}
\definecolor{my-light-blue}{HTML}{73a7d3}
\definecolor{my-gray}{HTML}{999999}
\definecolor{my-orange}{HTML}{E69500}
\definecolor{my-light-orange}{HTML}{FFC353}

\newcommand{\rcl}[1]{{\color{my-red} #1}}

\newcommand{\h}[0]{\hspace{0.12em}}

\newtheorem{theorem}{Theorem}[section]

\newtheorem{lemma}[theorem]{Lemma}

\def\monkey{\scalerel*{\includegraphics{./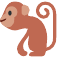}}{\textrm{\Large\textbigcircle}}}
\def\laptop{\scalerel*{\includegraphics{./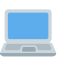}}{\textrm{\Large\textbigcircle}}}
\def\keyboard{\scalerel*{\includegraphics{./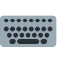}}{\textrm{\Large\textbigcircle}}}

\title{Universal pre-training by iterated random computation}


\author{Peter Bloem \\
Learning \& Reasoning Group, Vrije Universiteit Amsterdam \\
\texttt{up@peterbloem.nl} \\
}

%

\newcommand{\B}{\mathbb B}
\newcommand{\N}{\mathbb N}

\newcommand{\w}{\mathbold w}
\newcommand{\W}{\mathbold W}
\newcommand{\bE}{\mathbold E}
\newcommand{\M}{\mathbold M}

\newcommand{\bb}{\mathbold b}
\newcommand{\x}{\mathbold x}
\newcommand{\y}{\mathbold y}
\newcommand{\z}{\mathbold z}
\newcommand{\e}{\mathbold e}
\newcommand{\m}{\mathbold m}

\newcommand{\p}{\text{.}}

\begin{document}

\maketitle

\begin{abstract}
\noindent We investigate the use of randomly generated data for the sake of pre-training a model. We justify this approach theoretically from the perspective of algorithmic complexity, building on recent research that shows that sequence models can be trained to approximate Solomonoff induction. We derive similar, but complementary theoretical results. We show empirically that synthetically generated data can be used to pre-train a model before the data is seen. We replicate earlier results that models trained this way show zero-shot in-context learning across a variety of datasets, and that this performance improves with scale. We extend earlier results to real-world data, and show that finetuning a model after pre-training offers faster convergence and better generalization. 
 \end{abstract}

\begin{figure}[b!]
		\centering 
        \begin{tabular}{l l}
			{\keyboard\,\monkey} & \lstinline|Z'WY,!X#R_M!IK@JQ!?.>Z\_0&2L%V2G1D4'!| \\
			\hline
			{\laptop\,\monkey} & \lstinline|;5;;'6BUB5CBBBB5Z55BX'X5ZUZZ5P%X555Z5| \\ 
                & \lstinline|E$QFGQ.!XQN*Q,.!.,G**GFFFFF^^FPQ^!YQF| \\
				& \lstinline|\^R5D#**JI,DTTTT,TTTS\TITIDSDT*TTTT\\| \\
		\end{tabular}

        \caption{(\protect\keyboard\, \protect\monkey) A string of randomly sampled characters. (\protect\laptop\, \protect\monkey) The result of passing this string through three randomly initialized neural network models. The latter data is partly predictable, and so has value for pre-training.}
        \label{fig:example}
        
\end{figure}
\section{Introduction}
Even in the domain of natural language, where vast amounts of data are available, machine learning is approaching the point where the amount of data on which we would like to train a model is larger than the total amount data available \cite{villalobos2022will}. In many other domains, this point has long been passed. At the same time, as artificial intelligence becomes more and more integrated in production systems, producers of, for instance, literature and visual art, are making it increasingly clear that they do not consent to AI models being trained on their data \cite{blake2025disney,ap2023george}. In short, the question of whether we can do more with less data is increasingly urgent.

One approach is to generate \emph{synthetic data}. This may seem at first to be a false economy: how can we learn something from a data generator that we built ourselves? In information theoretic terms, the data processing inequality \cite[Section~2.8]{cover1999elements} tells us that we cannot increase the information content of a signal by applying computation to it. This seems to preclude the possibility of enriching our models with data from any other source than the real world.

\begin{figure}[t!]

  \centerline{\hspace{-1.2em}
    \includegraphics[width=1.6\textwidth]{./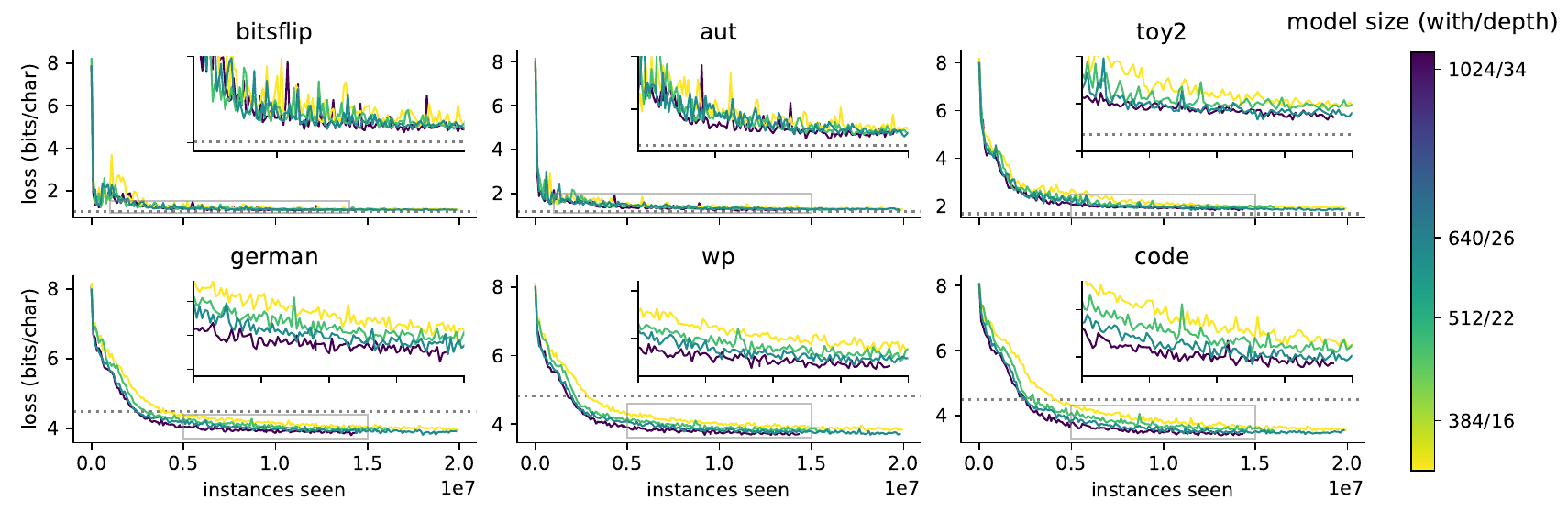}
  }
  \caption{The results of the main experiment (Section~\ref{section:scaling}). We train a transformer model on randomly generated data with computational structure, and test its prediction performance, zero-shot, on six simple datasets every 5000 instances. Horizontal lines indicate the performance of an in-context $n$th-order Markov model (optimized over orders $0$ -- $5$). The results show that the zero-shot behavior (a) is better than chance (8 bits/char) across the board (b) in some cases beats the performance of a Markov model (c) \emph{improves with scale}.}
  \label{figure:scaling}
\end{figure}

However, the key to valuable data is not information content, it's \emph{structure}. The most information-rich signal, a fully random one, is among the least valuable data to learn from. A valuable source of data provides a mix of randomness and structure \cite{bloem2015two}.

Consider the following analogy---from \cite[Section~14.6]{cover1999elements}. A monkey behind a typewriter, bashing keys at random, will famously produce the complete works of Shakespeare in a large, but finite amount of time. A monkey behind a \emph{computer} will also produce the collected works of Shakespeare, and \emph{it will do so more quickly}. This is because there is structure in natural language that may be exploited by a computer program. The second monkey only needs to be lucky enough to type out a computer program that produces the collected works of Shakespeare, whereas the first monkey must be lucky enough to get every single character correct.

The key takeaway is that by taking a source of randomness---whether a monkey or a random number generator---and passing its output through a computer, we can \emph{enrich} data (from the perspective of machine learning). This may well reduce the amount of information in the data (in the Shannon/Kolmogorov sense of the word), but it adds structure. Structure that a machine learning model can learn, and that may transfer to real-world tasks.

With this principle in mind, we ask the following question. Can we pre-train a model before seeing the data? Can we generate rich random data, before knowing what our task is, such that pre-training on this random data benefits learning, regardless of the task? We show that this is indeed possible, and we call the resulting approach \emph{universal pre-training}. We provide some theoretical foundation for this principle, from the field of algorithmic complexity.\footnotemark

\footnotetext{More popularly known as Kolmogorov complexity (or sometimes Solomonoff–Kolmogorov–Chaitin complexity). We will use the phrase \emph{algorithmic complexity} throughout to emphasize its three independent inventors.}

The idea that this can be done is not new. In \cite{grau2024learning}, the authors sample data from an approximation to the universal distribution---the output of a universal Turing machine fed with random bits---and show that a sequential prediction model trained on such data approximates Solomonoff induction, a theoretical, optimal general-purpose learning algorithm.

Separately, in \cite{muller2021transformers,hollmann2025accurate} the authors show that by pre-training on data sampled synthetically from a carefully crafted prior, a transformer model can be trained to perform remarkably strong tabular classification and regression tasks. 

We call this approach---training on synthetic data with the aim of producing a model which predicts well for a broad range of tasks---\emph{universal pre-training} (emphasizing that the method does not originate here, and we merely introduce a single moniker for all these approaches).

Our work makes the following contributions on top of these established methods. 
\begin{enumerate}
\item We establish a theoretical foundation on the basis of class-bounded prefix-free algorithmic complexity\footnotemark~\cite{bloem2014safe}, rather than the monotone algorithmic complexity of Solomonoff induction used in \cite{grau2024learning}. We consider this a complementary, rather than a competing approach, since the important properties are shared between the two frameworks. For probability distributions on finite data, and for arbitrary model classes, the class-bounded, prefix-free formalism may offer some benefits.
\item We modify the approach first demonstrated in \cite{grau2024learning} of sampling data from some approximation to the universal distribution. We specifically focus on the idea of enriching data by passing it through random computations, and show theoretically that this leads to (a) hierarchies of model classes, and (b) an approximation to the universal distribution in the limit. 
\item We evaluate from the perspective of pre-training. We scale up compared to the models used in \cite{grau2024learning} and demonstrate that the method works on real world data including natural language and code. We observe that performance improves with scale and we show benefits to finetuning after universal pre-training. 
\item We provide synthetic data generators for evaluation. For some of these, our model does not yet outperform a simple in-context Markov model, while by design a better solution does exist, providing a sound challenge for future work in this direction.
\end{enumerate}

\noindent As a whole, we hope that our perspective shows the importance of \emph{computational depth} \cite{antunes2006computational}. That is, random data can be made more valuable for pre-training, but the amount of value is related to the amount of \emph{computation}. As such, universal pre-training is a data-computation tradeoff. We can get by with less data by spending more on computation.

Figure~\ref{fig:example} shows a simple example of our perspective: a string of ASCII characters, each chosen uniformly at random, and the result of passing that string through a randomly initialized neural network model. The first string is fully random, and nothing useful may be learned from it. The second string contains structure: certain characters and structures re-occur in a predictable way. We can make sensible predictions about one part of the string by studying another part of the string. 

Some of these basic patterns may transfer to other domains. For instance, the relative frequency of a character at the start of the string, is more likely than not be a reasonable prediction of its probability at the end of the string, and this pattern will hold across a wide variety of tasks.

 Other patterns, such as the specific characters that are frequent, are entirely specific to the current sample, and will disappear if we sample a second string in the same way.

Ultimately, if universal pre-training works well, patterns that are likely to be \emph{universal}---that is, shared by many datasets generated from computational sources---will be learned by the model, whereas patterns that are specific only to some sources, will quickly be averaged out. 

The idea of universal patterns may seem to fly in the face of the no-free-lunch theorem \cite{wolpert1997no}, which shows that no model outperforms any other when averaged over all tasks. The idea put forward in \cite{grau2024learning}, and which we aim to build on here is that while no patterns are universal over all datasets, some patterns are  universal over those datasets that were generated by the combination of a computational process and a source of randomness. This assumption is very broad and likely captures any data which we may hope to be able to analyse by computational means. Under these assumptions, as has been established before \cite{lattimore2013no,goldblum2023no}, the no-free-lunch theorem no longer applies, and universal patterns can be shown to exist.

We hope to emphasize here that this is not just a theoretical curiosity, but that universal pretraining shows  practical benefits already. First, the better we can approach universal pretraining, the more we may amortize the training investments over future tasks. In the limit, we end up with a model that can be trained once at great expense and then be used to perform learning on all tasks by in-context learning or finetuning. In addition to good in-domain performance at low amortized cost, such a model also promises exceptional out-of-domain generalization, since the base model was trained for universal induction.

In the conclusion, we suggest a series of key research directions that should be pursued to establish whether universal pre-training may become a viable approach at scale in the future. We also pause to consider what the broader impact may be, if this data/compute tradeoff is chased blindly, based on optimistic predictions, and provide some caution.

All code is available at \url{https://github.com/pbloem/up}.

\section{Preliminaries}

We will briefly review the required preliminaries. See \cite{ming1990kolmogorov} for a more detailed treatment. These are mostly required to understand the theoretical framework we use to show that universal pre-training is feasible. To understand the experimental part of the paper, the intuitive explanation in the introduction should suffice.

Let $\B$ be the set of finite binary strings $\{\epsilon, 0, 1, 00, 01, 10, 11, 000, \ldots\}$ with $\epsilon$ the empty string. Let $|x|$ be the length of string $x \in \B$ in bits. We write the concatenation of bitstrings $a$ and $b$ as $ab$.

Let $p(x)$ be any probability distribution on any countable set: then there exists a prefix-free code \cite[Section~5.1]{cover1999elements} on that same set such that the length $L(x)$ of the codeword for $x$ follows the relation $L(x) = - \log_2 (x)$ up to one bit. We will hand-wave this minor difference and equate codelengths with negative log-probability as explained in \cite[Section~3.2.2]{grunwald2007minimum}.

We will use Turing machines \cite{turing1936computable} to model computable probability distributions on $\B$. To this end we fix Turing machines with an \emph{input tape}, a \emph{conditional} tape and an \emph{output} tape, as well as some number of work tapes.

The read-head on the input tape is only allowed to move from left to right: the effect of this constraint is that after reading a bit from the input tape, the machine either halts, reads the next bit, or computes infinitely without doing either. This means that the set of inputs on which a machine halts forms a prefix-free set: no halting input is the prefix of another halting input. This is called a prefix-free Turing machine \cite[Chapter~4]{ming1990kolmogorov}. Note that the heads on the conditional, output and work tapes can move in both directions.

If a machine $T$ halts, we take the input bits it has read as its input $x$, $z$ as an optional conditional value, and the value on its output tape when it halts as its output $y$, writing $T(x, z) = y$. We write $T(x)=\infty$ for an input on which the machine does not halt (that is, it processes indefinitely without advancing the read-head). We write $T(x) := T(x, \epsilon)$.

We can now use such machines to simulate probability distributions, by feeding them random bits. We start the machine, and whenever it reads from the input tape, we sample a random bit and provide it. If the machine halts, we take the output as our sample.\footnotemark

\footnotetext{Since the machine may not halt, we may end up with a deficient distribution on $\B$. However, we will mostly deal with subsets of the Turing machines that always halt, if given sufficient input. If we place no restrictions on the Turing machines used, we have defined the class of \emph{lower semicomputable semimeasures} \cite[Lemma~1]{bloem2014safe}. We will refer to these as the computable distributions for the sake of simplicity.}

We assume that we are given some enumeration of all such Turing machines $T_i$, with $i\in 0, 1, 2, \ldots$. We refer to the probability of sampling $y$ from Turing machine $i$ as $p_i(y)$. Since each input $x$ is sampled with probability $2^{-|x|}$, we can write $p_i$ as

$$
p_i(y) = \sum_{x : T_i(x) = y} 2^{-|x|} \p  
$$

Due to the universality of Turing machines, any distribution from which we can sample by a computational process fed with random bits is simulated by a Turing machine in this enumeration. This includes for example, a sample from a normal distribution, from a Markov model or from a probabilistic context-free grammar. Moreover, every probability distribution whose probability function is computable (or lower semicomputable) is represented in the enumeration by a Turing machine that samples from it.

Next, assume a pairing function $(i, x)$ that encodes $i$ and $x$ into a single string in $\B$. We can define a universal Turing machine $U((i, x)) = T_i(x)$, which unpacks the string $(i, x)$ (assuming some binary encoding of the natural numbers $i$) and then simulates the running of the $i$-th Turing machine on input $x$. If the pairing function uses prefix-free codewords, which is easy to achieve, then $U$ is itself a prefix-free Turing machine, which means that it occurs somewhere in the enumeration $T_i$. \footnotemark

\footnotetext{A simple way to achieve a prefix-free pairing function is to define a prefix-free encoding of $x \in \B$, written as $\overline x$. We can then use the pairing function $(i, x) = \overline \imath\, \overline x$, since concatenating two prefix-free codewords results in another prefix-free code.}


With this, we can sample from $U$, feeding it random bits until it halts. This effectively samples a Turing machine $i$ according to some prior probability $p(i) = 2^{-|i|}$, and then samples an output $y$ according to  $p_i(y)$. We call this the \emph{universal distribution} $m(y)$. The universal distribution is a Bayesian mixture over the class of computable distributions.

To emphasize the contrast in approach with \cite{grau2024learning} and Solomonoff induction in general \cite{solomonoff1964formal1,solomonoff1964formal2}, note that in that framework the Turing machines are \emph{monotone} rather than \emph{prefix-free}. Specifically, that means that the output tape is also one-way. This is useful in modeling infinite sequences sequentially, while prefix-free Turing machines are better suited to modeling finite objects. We show below that the properties that are important for current purposes are shared by both frameworks.\footnotemark

\footnotetext{This result is not novel. We merely extract the relevant parts of the known theory \cite[Chapters 4 and 5]{ming1990kolmogorov} and restate them in our notation. The equivalence of the prefix-free framework and the monotone framework follows from the relation between the universal discrete semimeasure $\m(x)$---which corresponds to $p_U$---and the universal continuous semimeasure $\M(x)$---which emerges from the monotone framework.}

\section{Related work}

\paragraph{Synthetic data}

Training on synthetic data goes back at least to the simple principles of data augmentation \cite{wang2024comprehensive}. In recent years, much research has been devoted to the use of domain-specific simulators generating synthetic data to encode very specific domain assumptions. For instance, in training self-driving cars, robotics control, tabular data and so on \cite{shafaei2016play,collins2021review,hollmann2025accurate}.

When pre-training on purely synthetic data, the generator of this data is a probability distribution, which functions much like a Bayesian prior. This prior is usually easy to sample from, but hard to perform inference with. One perspective on pre-training then becomes that we are fitting a model to data sampled from our prior so that we have a prior that allows for some form of inference. This view is called a prior-data fitted network (PFN) \cite{muller2021transformers}. The models in this paper (as well as those in \cite{grau2024learning}) fit this category. 

The argument is often made that PFNs require carefully tuned priors that in some sense overlap with the expected test domain \cite{nagler2023statistical,muller2021transformers}. \cite{grau2024learning} offers a more subtle view: if the source of pre-training data is the universal distribution, then in the limit of long sequences the PFN predicts as well in context as any computational distribution (including the true source of the data). We cannot, however, sample from the universal distribution, so apprixmations are needed. Nevertheless, we can still make the required claims of universality if the pre-training distribution dominates the source of the data, as explained in the next section.

Both in synthetic and non-synthetic pre-training domains, a trend is emerging suggesting that domain specific features are not required: all that is often needed is a large variety of structural features. For example, in \cite{nakamura2024scaling} a simple dataset of fractals was found to be beneficial for pretraining a visual model, while in \cite{venkataramanan2023imagenet}, the frames of a single, long video were shown to be as valuable in pre-training as the whole of ImageNet. 

The perspective of universal pre-training offers a perspective on this trend: it is not world knowledge that is primarily learned from such datasets, but general computational structure. In the limit, we can train on all possible computational structure and apply any relevant learned patterns during inference.


\paragraph{The No-Free Lunch Theorem}

The idea of pre-training a model irrespective of its task seems to fly in the face of the no-free-lunch (NFL) theorem \cite{wolpert1997no}, which states that, averaged over all tasks, all models perform the same. This suggests that if our pre-training makes any change at all to the model's performance on a subset of tasks, it must reduce its performance on other tasks.

This is indeed true, but it's possible to get out of the NFL theorem with a very broad, almost universal assumption about our data \cite{lattimore2013no}: that the process $p$ that generated it is \emph{computational}. That is, it can be simulated by a probabilistic Turing machine. Under this assumption, the source of our data is \emph{dominated} by the universal distribution $m$, and by any class-universal distribution $m_C$ for which $p \in C$. 

This means that there are computable probability distributions $m_C$ that fit our data as well as its source $p$ does (up to a multiplicative constant). Moreover, this holds for all $p \in C$. Therefore, $m_C$ provides a universal predictor for $C$. This does not violate the NFL theorem, because we do make an assumption about the data, namely that it comes from a source in $C$. However, $C$ can be made \emph{very} broad---for instance the class of all polynomial-time Turing machines---while still allowing practical sampling \cite{bloem2014safe}.

\section{In theory}
\label{section:theory}

We will first build on the preliminaries to establish some useful theoretical properties. In the next section, we will use these to inform a practical approach to universal pre-training in the context of sequential transformer models.

Let $C$ be a set of computable functions from $\B$ to $\B$. If we define some prior $p$ on $C$, we can sample a random model from $C$, feed it random bits $r$, and observe the output $x$. The probability of sampling some specific $x$ from this process is 
\[
m_C(x) = \sum_{c\in C} p(c)\;p_C(x) = \sum_{c\in C,r \in R} p(c)\;2^{-|r|} \;\;\;\text{with}\;\;\; R = \{r \mid c(r) = x\}\p 
\]

%

We will build our theoretical argument in three steps. First, we will show that sampling $c$ and feeding it random bits $r$ until it halts corresponds to sampling from a class-universal distribution for $C$. The code corresponding to $m_C$ with codelength function $L_C(x) = - \log m_C(x)$  compresses $x$ at least as well as any member of $C$, up to an additive constant. This is called \emph{domination}, as in $m_C(x)$ dominates any member of $C$.

%

Second, under reasonable assumptions, passing the generated data through another member of $C$---iterating the process---extends the model class from which we are sampling, thus creating a hierarchy of classes.


Third, we show that while the limit of this hierarchy is not guaranteed to be the universal class---that is, the set of all computable distributions---it is possible to set up our model classes so that this is is the limiting case. Specifically, we show that this is possible with a model class of (recurrent) neural networks with randomly sampled weights

This suggests that sampling random bits and feeding them iteratively through a recurrent neural network will approximate the universal distribution in the limit of (a) the number of iterations, and (b) the allowed size of input/output sequences. This holds even if the neural networks themselves are bounded in size.

\subsection{Model-bounded algorithmic complexity}

Let $C$ be an enumerable subset of Turing machines. Let $p(c)$ be a computable distribution on the members $c$ of $C$. 

Let $m_C(x)$ be distribution from which we sample by first sampling $c$ from $p(c)$ and then sampling $x$ from $c$ as described in the preliminaries.

In \cite{bloem2014safe}, it is shown that for all $x$ and all $c$ in $C$:

\[
m_C(x) \overset{\times}{>} p_c(x) \hspace{2em}\text{or equivalently}\hspace{2em}
-\log m_C(x) \overset{+}{<} - \log p_c(x)
\]

Where $a(x) \overset{+}{<} b(x)$ indicates that $a(x)$ is bounded from above by the function $b(x) + c$ for some constant $c$ and similarly, $a(x) \overset{+}{>} b(x)$ means that $a(x)$ is bounded from below for some constant $c$.

This result shows that $m_C$ always assigns any $x$ greater probability than any individual member of $C$, up to a multiplicative constant independent of $x$ and equivalently the code corresponding to $m_C(x)$ compresses better than the code corresponding to any individual member of $C$ up to an additive constant number of bits. 

The relevance to our current purpose is that if we sample data from any distribution $c \in C$, and then use $m_C$ to compress it, our regret (for not having used the optimal $c$) will be bounded by a constant.

In \cite{bloem2014safe} conditions are given under which $m_C()$ can be approximated computably: all TMs in $C$ must be \emph{sufficient} (they must halt eventually if fed sufficient bits). This also ensures that $m_C$ can be sampled from in finite time.

The following lemma shows a simple domination result between classes. 

\begin{lemma}
	For model classes $C, D$ if $D$ contains a turing machine $u((i, x)) = T_i(x)$ with $i$ enumerating $C$ and $(,)$ a prefix-free pairing function, then $m_D$ dominates $C$.
\end{lemma}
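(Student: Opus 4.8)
The plan is to chain two domination steps through the universal machine $u \in D$.

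The first step is immediate from the definition of $m_D$. Writing $m_D(x) = \sum_{d \in D} p(d)\, p_d(x)$ and keeping only the term $d = u$ gives $m_D(x) \ge p(u)\, p_u(x)$, i.e. $m_D(x) \overset{\times}{>} p_u(x)$, with a multiplicative constant $p(u)$ that is positive whenever the prior gives $u$ nonzero mass, in particular for the standard choice $p(c) = 2^{-|\text{index of }c|}$. This is just the $m_C \overset{\times}{>} p_c$ bound of \cite{bloem2014safe} recalled above, applied to the class $D$ and its member $u$.

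The substantive step is to show $p_u(x) \overset{\times}{>} p_c(x)$ for every $c \in C$. Fix $c \in C$ and let $i_0$ be its index in the enumeration of $C$, so $c = T_{i_0}$ and $p_c = p_{i_0}$. The operational reading of $u((i,x)) = T_i(x)$ is that $u$ first reads the prefix-free codeword $\overline{\imath}$ off its one-way input tape to recover the index $i$, and then simulates $T_i$ on the untouched remainder of the tape; since $T_i$ is itself prefix-free, this remainder is consumed exactly as $T_i$'s own input. Hence every string of the form $\overline{\imath_0}\, q$ with $T_{i_0}(q) = x$ is an input on which $u$ outputs $x$, these strings are pairwise distinct, and $\overline{\imath_0}\, q$ carries weight $2^{-|\overline{\imath_0}| - |q|}$, so
\[
p_u(x) = \sum_{w:\, u(w)=x} 2^{-|w|} \;\ge\; \sum_{q:\, T_{i_0}(q)=x} 2^{-|\overline{\imath_0}| - |q|} \;=\; 2^{-|\overline{\imath_0}|}\, p_{i_0}(x) \;=\; 2^{-|\overline{\imath_0}|}\, p_c(x)\p
\]
The constant $2^{-|\overline{\imath_0}|}$ depends only on $c$ (through the length of its codeword), not on $x$.

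Composing the two bounds yields $m_D(x) \ge p(u)\, 2^{-|\overline{\imath_0}|}\, p_c(x)$ for all $x$ and all $c \in C$, which is precisely the claim that $m_D$ dominates $C$ (domination being up to a multiplicative constant that may depend on the dominated member). I do not expect a genuine obstacle: the only points needing care are spelling out the operational semantics of $u$ so that the middle inequality above is an honest lower bound rather than an approximation — a subset of $u$'s halting inputs, each with the right weight — and noting that, as usual for domination statements, the constant is permitted to depend on $c$ while remaining uniform in $x$.
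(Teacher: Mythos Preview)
Your proof is correct and follows essentially the same route as the paper: keep only the $d=u$ term in $m_D(x) = \sum_{d\in D} p(d)\,p_d(x)$ to get $m_D(x) \ge p(u)\,p_u(x)$, then exploit the structure of $u$'s inputs $(i,q)$ to bound $p_u$ below by each $p_c$. The paper compresses your second step by writing $p_u(x) = m_C(x)$ directly (since feeding $u$ random bits is exactly sampling $i$ with weight $2^{-|\overline\imath|}$ and then sampling from $T_i$), concluding $m_D(x) \ge p(u)\,m_C(x)$; your version unpacks this by fixing one index $i_0$ and lower-bounding $p_u$ by the corresponding summand $2^{-|\overline{\imath_0}|}p_c(x)$, which is the same computation with the sum over $i$ dropped to a single term.
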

\begin{proof} 
\begin{align*}
m_D(x) &= \sum_{d \in D, r \in R} p(d) 2^{-|r|} \text{ with } R(r \mid d(r) = x)\\
& \geq \sum_{r} p(u) 2^{-|r|} = p(u)m_C(x) 
\end{align*}
\end{proof}

This type of result allows us to build a hierarchy over model classes, from very simple computable classes at the bottom to increasingly complex classes, requiring larger and larger amounts of computation toward the top. 

The top of the hierarchy is the \emph{universal distribution} $m$ which we get for the class of all (prefix-free) Turing machines. It is well known that $- \log m(x)$ is equal (up to a constant) to the prefix-free algorithmic complexity $K(x)$. 

The key proposition behind universal pre-training is that we sample from $m_C$, with $C$ as high up in the hierarchy as we can go,\footnotemark~and train a model on such data to approximates $m_C$. Such a model (up to the approximation quality) would dominate any model in $C$. If the source of our data is in $C$, predicting with $m_C$ gives us bounded regret, without ever seeing the data. 

\footnotetext{It may be tempting to think that we can sample from the universal distribution, since we can simply feed the UTM random bits until it produces an output. However, since our patience will always be bounded (by our lifetime or the lifetime of the universe), we would actually be sampling from a time-bounded UTM (which corresponds to a class $C$ somewhere in our hierarchy).}

\subsection{Sequential prediction}

In Solomonoff induction---and by extension in \cite{grau2024learning}---the theoretical framework is one of monotone Turing machines producing infinite strings, applied to sequential prediction: predicting the bits in a string one-by-one, each conditioned on the last. 

In our framework, since we use prefix-free Turing machines, we only model finite strings, and we primarily describe the probability of the whole string rather than its individual bits.

In this section, we will show that the frameworks are fundamentally compatible in the important properties.

First, for a probability $p$ in $\B$ we write the conditional probability of seeing a \emph{prefix} $x$ continue with the bit $b$ as $p(b \mid x)$. This is defined as

\[
p(b \mid x) = \frac{p(xb\_)}{p(x\_)}
\]
where $x\_$ is the set of all finite strings with the prefix $x$ (which includes $x$ itself). That is, if we have sampled a string from $p$ and we read it from left to right, once we have read the bits $x$, the probability that the next bit is $b$ is given by $p(b \mid x)$.

It is not given that if $q$ dominates $C$, that $q(b\mid x)$ always dominates $p_c(b\mid x)$. In practice, $p_c(b \mid x)$ may be much larger for individual strings than $q(b \mid x)$. However, we can show that in aggregate, these fluctuations disappear quickly as strings grow longer.

Let 
\begin{align*}
D_n &= \sum_{|x| = n} p(x\_) KL( p(b \mid x), q(b \mid x) ) \\
&= - \sum_{|x| = n} p(x\_) \sum_b p(b \mid x) \log \frac{p(b \mid x) }{q(b \mid x)} \p \\ 
\end{align*}
That is, if we sample a prefix $x$ of length $n$ from $p_c(x)$, $D_n$ is the expected KL divergence between the distribution that $p_c$ assigns to the next bit, and the distribution that $q$ assigns to the next bit. 

We can now show the following.

\begin{theorem}[Adapted from Theorem~5.2.1 in \cite{ming1990kolmogorov}]
If $q$ dominates $p$, then $\sum_{n=1}^{\infty} D_n$ is bounded. 
\end{theorem}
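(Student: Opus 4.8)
The plan is to telescope the per-bit KL divergences into a single global quantity and then bound that quantity by the domination constant. The key observation is that $D_n$, the expected next-bit KL divergence at prefixes of length $n$, should sum to something like the expected log-ratio $\mathbb{E}_{x \sim p}\log\frac{p(x)}{q(x)}$ over full strings — a relative-entropy-type quantity that domination controls directly. Concretely, since $q$ dominates $p$ we have $-\log q(x\_) \overset{+}{<} -\log p(x\_)$, i.e. there is a constant $\gamma$ with $q(x\_) \geq 2^{-\gamma} p(x\_)$ for all $x$, hence $\log\frac{p(x\_)}{q(x\_)} \leq \gamma$ pointwise. The whole proof is a matter of showing $\sum_{n=1}^N D_n$ equals (or is bounded by) an expectation of such a log-ratio.

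\textbf{Step 1: rewrite $D_n$ as a difference of cross-entropies / expected log-ratios.} Using $p(b\mid x) = p(xb\_)/p(x\_)$ and the analogous identity for $q$, I would expand
\[
D_n = \sum_{|x|=n} p(x\_)\sum_b p(b\mid x)\log\frac{p(b\mid x)}{q(b\mid x)}
= \sum_{|x|=n+1} p(x\_)\log\frac{p(x\_)}{q(x\_)} \;-\; \sum_{|x|=n} p(x\_)\log\frac{p(x\_)}{q(x\_)},
\]
where in the first sum I have absorbed the length-$n$ prefix $x$ together with its one-bit extension $b$ into a single length-$(n+1)$ string, and used $\sum_b p(b\mid x) = 1$ so that the two $\log p(x\_)$ and $\log q(x\_)$ terms regroup cleanly. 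Defining $A_n := \sum_{|x|=n} p(x\_)\log\frac{p(x\_)}{q(x\_)}$, this says exactly $D_n = A_{n+1} - A_n$.

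\textbf{Step 2: telescope and bound.} Summing, $\sum_{n=1}^{N} D_n = A_{N+1} - A_1$. The initial term $A_1$ is a fixed finite constant (a sum over the two strings $0,1$). For the tail term $A_{N+1}$, domination gives $\log\frac{p(x\_)}{q(x\_)} \leq \gamma$ for every $x$, so $A_{N+1} \leq \gamma \sum_{|x|=N+1} p(x\_) $. I would then argue $\sum_{|x|=n} p(x\_) \leq 1$ for each $n$ — or more carefully, since $p$ may be a deficient semimeasure, this sum is at most $1$ and is in fact the probability that the sampled string has length at least $n$ — so $A_{N+1} \leq \gamma$. Therefore $\sum_{n=1}^{N} D_n \leq \gamma - A_1$ uniformly in $N$. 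Since each $D_n \geq 0$ (it is an average of KL divergences, hence nonnegative), the partial sums are monotone and bounded, so $\sum_{n=1}^\infty D_n$ converges, with the bound $\overset{+}{<} \gamma$.

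\textbf{The main obstacle} I anticipate is handling the fact that $p$ (and $q$) are semimeasures rather than proper measures, so that $\sum_b p(b\mid x)$ need not equal $1$ and $\sum_{|x|=n} p(x\_)$ can be strictly less than $1$ and may even decrease in $n$ — one must be careful that the regrouping in Step 1 still goes through and that the telescoping does not accidentally require terms that diverge or become negative in an uncontrolled way. A related subtlety is the definition of $p(b\mid x)$ when $p(x\_) = 0$, which should simply be excluded from the sums since such prefixes carry zero weight. I would also double-check the direction of the inequality: domination bounds $A_n$ from above by $\gamma$, but I should confirm $A_n$ is also bounded below (it can be negative) so that the telescoped expression is genuinely controlled — nonnegativity of $D_n$ handles this, since it forces $A_{N+1} \geq A_1$, pinning the partial sums between $0$ and $\gamma - A_1$. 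Modulo these measure-theoretic hygiene points, the argument is the standard telescoping proof of the Solomonoff-normalized bound, adapted from Theorem~5.2.1 of \cite{ming1990kolmogorov}.
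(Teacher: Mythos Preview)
Your proposal is correct and takes essentially the same approach as the paper. Both arguments reduce $\sum_{n=1}^{k} D_n$ to the single quantity $\sum_{|z|=k} p(z\_)\log\frac{p(z\_)}{q(z\_)}$ and then bound it via domination; the paper does this by extending each $xb$ with a suffix to a common length $k$ and collapsing the product via the chain rule, which is exactly the identity underlying your telescoping $D_n = A_{n+1} - A_n$ (and you are, if anything, more explicit than the paper about the semimeasure caveats).
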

\begin{proof}
First, we take the sum up to some value $k$. 
\begin{align*}
\sum_{n=1}^k D_n &= - \sum_{n, |x| = n, b} p(x\_) p(b \mid x) \log \frac{p_c(b \mid x) }{q(b \mid x)} \\
&=  - \sum_{n, |x| = n, b} p_c(x\_) \frac{p(xb\_)}{p(x\_)} \log \frac{p_c(b \mid x) }{q(b \mid x)} \\
&=  - \sum_{n, |x| = n, b} p_c(xb\_) \log \frac{p_c(b \mid x) }{q(b \mid x)} \\
\end{align*}	

Now note that if we extend $xb$ with an arbitary suffix $y$ of length $m$, we get 

\begin{align*}
\sum_n D_n &= \sum_n\sum_{|x| = n, b, |y| = m} p_c(xby\_) \log \frac{p_c(b \mid x) }{q(b \mid x)}
\end{align*}

This is because $p_c(xb\_) = \sum_{|y|=m} p_c(xby\_)$ and the quantity inside the logarithm is the same for all $xby$. Choose the length of $y$ so that $n + 1 + |y| = k$. Then  

\begin{align*}
\sum_{n=1}^{k} D_n &= \sum_n \sum_{|z| = k} p(z\_) \log \frac{p(z_{n+1} \mid z_{1:n}) }{q(z_{n+1}\mid z_{1:n})} \\
&= \sum_{|z| = k} p(z\_) \sum_n  \log \frac{p(z_{n+1} \mid z_{1:n}) }{q(z_{n+1}\mid z_{1:n})} \\
&= \sum_{|z| = k} p(z\_)  \log \prod_n \frac{p(z_{n+1} \mid z_{1:n}) }{q(z_{n+1}\mid z_{1:n})} \\
&= \sum_{|z| = k} p(z\_)  \log \frac{p(z\_) }{q(z\_)}
\end{align*}

Since $q$ dominates $p_c$, the fraction in the logarithm is bounded by a constant independent of $z$ and $n$, so that 
\begin{align*}
\sum_n D_n &\overset{\times}{<} \sum_{|z| = k} p_c(z\_) \leq 1
\end{align*}
\end{proof}


As a consequence, since $m_C$ dominates all models in $C$, it has a bounded cumulative expected KL divergence $\sum_n D_n$ against all models in $C$. If we don't know what the source of our data is, but we do know that it comes from $C$, then using $m_C$ as our predictor is optimal in the limit. 

This shows first that the primary property of Solomonoff induction holds for prefix-free Turing machines as well as for monotone Turing machines: if we perform sequential prediction, then in the limit of large data, the predictions of a dominating model $q$ converge to be as good as any dominated model $p$. 

Second, it shows that if we limit ourselves to a subclass $C$ of Turing machines for reasons of practicality, we still have this property, so long as our data is produced by a member of $C$. If we set $C$ to the class of all Turing machines, we recover the universal setting of the original Solomonoff induction (albeit on a basis of prefix-free TMs). 

The benefit of our perspective is that we can now talk about any class of probability distributions (or semi-measures) on finite objects. So long as these are (semi)-computable, we can cast them as a subset of the Turing machines\footnotemark and fit them in our framework. We believe this provides a simple and natural bridge between probability distributions used in daily practice and the language of algorithmic complexity. 

\footnotetext{Note that we don't need to identify \emph{all} Turing machines that compute our class members---this is impossible by Rice's theorem. We only need to find one Turing machine for every member of our class.}

\subsection{Iterative sampling}

The example from the introduction provides a simple intuition: taking random noise and adding random computation to it results in data that is more valuable for pre-training than the original noise. 

This immediately suggests a natural followup question: if we take this "enriched noise" and add even further computation to it, does it become, in some way, enriched further? We show that in carefully controlled circumstances, this is indeed the case.

To ensure that any amount of random bits can be used, we will model this process by placing random bits on the \emph{conditional} tape (placing them on the input tape would require a specific amount of bits for the Turing machine to produce an output).

Let $p_u$ be a base distribution on $\B$ which is uniform in the sense that all strings of the same length have equal probability.

Let $m^1_C$ be the distribution defined by the process of sampling $u$ from $p_u$ and $c$ from $p(c)$ and computing $c(r, u)$, where $r$ are the random bits fed to the Turing machine's input tape and $u$ is placed on the output tape. That is, we are sampling from $p_C(x \mid u)$. Call $m_C^0 := p_u$. Let $i$ be the Turing machine which always copies the conditional input to the output, ignoring the random bits on the input tape.

Let $m^{n+1}_C$ be the distribution defined by the process of sampling $u$ from $m^n_C$ and $c$ from $p(c)$ and computing $c(r, u)$. 

\begin{theorem}
Let $u, i \in C$. Then $m_C^{n+1}$ dominates $m_C^n$.	
\end{theorem}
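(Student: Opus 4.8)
The plan is to mimic the inter-class domination lemma proved earlier, exploiting the fact that the identity machine $i \in C$ lets the $(n{+}1)$-st iteration ``do nothing'' and thereby reproduce the $n$-th distribution up to the constant prior weight $p(i)$. Concretely, I would unfold the definition of $m_C^{n+1}$ as a sum over a choice of machine $c \in C$, random input bits $r$, and a conditional string $u$ drawn from $m_C^n$, and then restrict that sum to the single term $c = i$.

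**First** I would write
\begin{align*}
m_C^{n+1}(x) &= \sum_{c \in C}\; p(c) \sum_{u} m_C^n(u) \sum_{r\,:\,c(r,u)=x} 2^{-|r|}\p
\end{align*}
**Then** I drop every term except $c=i$; since $i$ copies its conditional tape to the output and ignores $r$, the innermost sum collapses: $i(r,u)=x$ holds exactly when $u=x$, and $\sum_{r} 2^{-|r|}=1$ because $i$ halts immediately without reading the input tape (here is where $i$ being a legitimate, always-halting member of $C$ matters, and where the uniformity/prefix-freeness of the input encoding is used so that the empty input has weight $1$). This yields
\begin{align*}
m_C^{n+1}(x) &\;\ge\; p(i)\sum_u m_C^n(u)\,[u=x] \;=\; p(i)\, m_C^n(x)\p
\end{align*}
Since $p(i)>0$ is a constant independent of $x$ and of $n$, this is exactly $m_C^{n+1}(x)\overset{\times}{>}m_C^n(x)$, i.e.\ domination, and by the earlier remark it is equivalent to $-\log m_C^{n+1}(x)\overset{+}{<}-\log m_C^n(x)$.

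**The main obstacle** I anticipate is not the algebra but getting the bookkeeping of the three sources of randomness right: the theorem statement places the random bits $r$ on the input tape and the sampled string $u$ on the conditional tape, yet the displayed definitions of $m_C^1$ and $m_C^{n+1}$ in the text say ``placed on the output tape,'' so I would first pin down the intended reading (surely: $u$ on the conditional tape, $r$ on the input tape, result read off the output tape) and state it cleanly before computing, so the collapse ``$i(r,u)=x \iff u=x$'' is unambiguous. A secondary subtlety is that $m_C^n$ is itself only a (possibly deficient) semimeasure when members of $C$ need not halt; but the hypothesis $u,i\in C$ together with the standing assumption that the relevant $C$ consists of sufficient machines means $m_C^n$ is a genuine distribution, so no extra care is needed there. **Finally** I would remark that the base case $n=0$ is covered too, since $m_C^0=p_u$ and the argument used nothing about $m_C^n$ beyond its being a distribution on $\B$; hence by induction $m_C^{n+1}$ dominates $m_C^k$ for every $k\le n$, with the constant degrading multiplicatively as $p(i)^{\,n+1-k}$.
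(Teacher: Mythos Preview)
Your proof is correct and uses the same core idea as the paper: restrict the sum to the identity machine $i$ so that one layer of iteration acts as a no-op, leaving the previous distribution intact up to the factor $p(i)$. The only cosmetic difference is where the restriction is applied: you drop to $c=i$ at the \emph{outer} level (the $(n{+}1)$-st draw), giving $m_C^{n+1}(x)\ge p(i)\sum_u m_C^n(u)\,[u=x]=p(i)\,m_C^n(x)$ in one line, whereas the paper first unfolds $m_C^n(u)$ in terms of $m_C^{n-1}$ and then restricts the \emph{inner} machine $c'$ to $i$, collapsing back to $p(i)\,m_C^n(x)$. Your version is the more direct of the two and also handles the base case $n=0$ without needing $m_C^{-1}$; your remarks on the conditional/output tape typo and on the $p(i)^{\,n+1-k}$ decay of the constant match the paper's subsequent discussion exactly.
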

\begin{proof}

\begin{align*}
m_C^{n+1}(x) &= \sum_{u \in \B, c \in C} m_C^n(u) \, p(c) \, p_c(x \mid u) \\
&= \sum_{u, c} \left(\sum_{u', c'} m_C^{n-1}(u')p(c') p_{c'}(u \mid u')\right) p(c) \, p_c(x \mid u) \\
&\geq \sum_{u, c} \left(\sum_{u'} m_C^{n-1}(u')p(i)p_i(u\mid u')\right) p(c)\, p_c(x \mid u ) \\
&= \sum_{u, c} m_C^{n-1}(u) \, p(i) \, p(c) \, p_c(x \mid u) \\
&= p(i) \, m_C^n(x)
\end{align*}
\end{proof}

The conclusion is that by sampling random noise, feeding it through a random Turing machine and iterating this process, we are generating increasingly rich data (in the sense that we are getting closer in our hierarchy to the universal distribution). 



A few points are worth making explicitly. 
\begin{enumerate}
\item The end result of this iteration is not guaranteed to be the universal distribution. It may simply converge to some point where $m_C^n(x) \overset{\times} = m_C^{n-1}(x)$ for all $n$ above some value. However, we show in the next subsection that it is possible to set up a realistic model class (of recurrent neural networks) in such a way that we do approximate the universal distribution. 
\item Applying the theorem $k$ times, we see that  $m^n_C(x) \geq p(i)^km_C^{n-k}$.
This shows that with every step of the iteration the constant by which we dominate shrinks exponentially. The effect is easiest to interpret in log-space: $- \log m^n_C(x) \leq  - k \log p(i) -\log m_C^{n-k}$. This shows that for every iteration we add to the number of bits of slack we need to accept before we can say that $m_C^n$ compresses as well as $m_C^{n-k}$.
\item If the iteration converges to the universal distribution $m(x)$ in the limit, we know that this growth in the constant can be bounded, since the universal distribution is known to dominate any $m_C^n$ with a constant bound. That is, the argument above yields a bound which diverges with $k$, but a constant bound also exists. However, this only applies if the iteration converges to a distribution that samples from $m(x)$ with non-vanishing probability (which the iteration we will introduce in the next section doesn't).
\end{enumerate}

\subsection{Approximating universality}

We finish up the theoretical part of the paper by showing that we can choose a practical model class for which $m_C^n$ approximates the universal distribution with $n \to \infty$.

We will use the class of LSTMs \cite{hochreiter1997long}. This is a powerful model class, for which many theoretical properties are known, and also a very practical model, for which a great deal of software and hardware support is available, which will serve us in the experimental section. 

Consider a Turing machine with a single work tape $T$ at some point in its computation. We can represent its configuration by four bit strings (the content of the input, conditional, work and output tapes) and five natural numbers (the positions of the heads on each tape plus the state). We can represent the operation of one step of the Turing machine by a function $f: (\B)^4, \N^5,  \to (\B)^4, \N^5$.

We can set up an LSTM to compute $f_i$ as follows. We define an alphabet of three symbols: the two bits and a padding symbol. We represent these by one hot-coding, and concatenate the four tapes into one sequence of vectors, padding as necessary. We add an additional bit to indicate the position of the head, bringing the total dimensions to $16$. We encode the Turing machine's state in the initial value of the hidden vector.\footnotemark

\footnotetext{For TMs with very large numbers of states, this requires a minimum width or level of precision to represent. Since we need an LSTM only for the function $f$, we can set the width as necessary for the chosen precision and number of states in $T$.}

We add $k$ tokens with the padding symbol for all tapes (as "compute" tokens) and then read out the output. Since one step of the Turing machine requires a finite amount of computation, there is some sufficient value of $k$. 

On the output of the LSTM, we again produce 16-dimensional vectors, with a one-hot encoding per tape. We pass each output through a saturated linear activation

\[
\sigma(x) = \begin{cases}
0 & \text{if } x \leq 0\\
1 & \text{if } x \geq 1	\\
x & \text{otherwise.}\\
 \end{cases}
\]

We assume that the network which computes $f$ sets the pre-activations so that four bit strings result with the correct properties. 

Since LSTMs are Turing complete---they include Elman RNNs as a special case, which are Turing complete \cite{siegelmann1992computational,chung2021turing}---there exists for any $T$ an LSTM which computes this function.\footnotemark~Next, we would like to show that if we initialize an LSTM at random from a Gaussian over the parameters, that the probability of sampling a model which computes this function is not infinitesimal. 

\footnotetext{Note that we only need to compute one step of the TM.}

\begin{lemma}
Let $p(f)$ be the probability that an LSTM with $n$ parameters, initialized from a given non-degenerate Gaussian over its parameters computes the function $f$. If there exists one such initialization, then there exists some $\epsilon > 0$ such that $p(f) > \epsilon$
\end{lemma}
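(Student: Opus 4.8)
\section*{Proof proposal}

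The plan is to exploit the fact that the output activation $\sigma$ is \emph{locally constant} off the interval $[0,1]$: it is identically $0$ on $(-\infty,0)$ and identically $1$ on $(1,\infty)$. Hence the set $G\subseteq\mathbb{R}^n$ of weight vectors that compute $f$ is not, as for a generic smooth readout, a measure-zero algebraic variety; it contains an open set. Since a non-degenerate Gaussian on $\mathbb{R}^n$ has strictly positive density everywhere, it assigns positive mass to any open ball, so it suffices to exhibit, around some weight vector computing $f$, an open ball all of whose members still compute $f$. Then $p(f)\ge\Pr[\,w\in B\,]=:\delta>0$ and any $\epsilon\in(0,\delta)$ works.

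First I would reduce "computes $f$" to finitely many scalar constraints. Because the construction is used in the bounded-sequence-length regime of this section (and the simulated Turing machine has finitely many states and a finite tape alphabet), there are only finitely many valid inputs the LSTM must handle, and on each it emits a bounded number of $16$-dimensional one-hot vectors; letting $a_j(w)$ range over all the scalar pre-activations fed into the output $\sigma$ across these inputs and positions, there are finitely many indices $j$, and each $a_j$ is a composition of affine maps in $w$ together with sigmoids, $\tanh$'s and pointwise products, hence continuous in $w$. The condition "$w$ computes $f$" then reads: for each $j$, $a_j(w)\le 0$ when the target bit is $0$ and $a_j(w)\ge 1$ when it is $1$.

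The one genuine subtlety is that a given good initialization $w^\ast$ may place some $a_j(w^\ast)$ exactly on the boundary $\{0\}$ or $\{1\}$, where $\sigma$ is not locally constant, so $w^\ast$ need not be interior to $G$. I would remove this by a harmless rescaling of the final linear readout: replacing its weight matrix $W$ and bias $b$ by $3W$ and $3b-\tfrac32\mathbf{1}$ turns each $a_j$ into $3a_j-\tfrac32$, which is $\le-\tfrac32$ whenever $a_j\le 0$ and $\ge\tfrac32$ whenever $a_j\ge 1$; since $\sigma(3a-\tfrac32)=\sigma(a)$ for every $a\notin(0,1)$, the rescaled weights $\tilde w$ use the same architecture (same $n$), still compute $f$, and now satisfy $a_j(\tilde w)\le-\tfrac32$ or $a_j(\tilde w)\ge\tfrac32$ for every $j$. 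By continuity of the finitely many $a_j$, there is $r>0$ with $|a_j(w)-a_j(\tilde w)|<\tfrac12$ for all $w\in B(\tilde w,r)$ and all $j$; every such $w$ keeps each $a_j$ strictly outside $[0,1]$ on the correct side, so $\sigma(a_j(w))=\sigma(a_j(\tilde w))$ for all $j$, i.e.\ $w$ computes $f$. Thus $B(\tilde w,r)\subseteq G$, which has positive Lebesgue measure, and the conclusion follows as above.

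The main obstacle I anticipate is exactly the boundary/strictness issue — ruling out that $G$ is a lower-dimensional set pinned to the kinks of $\sigma$ — which the last-layer rescaling is designed to dispose of; a secondary point requiring care is the reduction to finitely many controlled pre-activations, which is where the bounded sequence length (and the finiteness of the machine's state set and alphabet, together with weight sharing across positions) is used.
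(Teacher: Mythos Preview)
Your proposal is correct and follows the same three-step skeleton as the paper: push the output pre-activations strictly into the saturated regions of $\sigma$ by an affine rescaling of the last linear layer, exhibit an open ball of parameters all of which still compute $f$, and conclude from the strict positivity of a non-degenerate Gaussian density on any ball. The difference lies in how the ball is produced. You reduce ``computes $f$'' to finitely many scalar pre-activation constraints $a_j(w)$ (using the bounded sequence length and finite alphabet to make the input set finite), note that each $a_j$ is a continuous function of $w$, and invoke continuity at the rescaled point $\tilde w$ directly. The paper instead propagates an allowed output perturbation backward through the network layer by layer---Lipschitz constants for the sigmoid and $\tanh$, a split for bias addition, and a submultiplicativity bound for matrix multiplication---to obtain a radius that works uniformly over all inputs. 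Your route is shorter and sidesteps the operation-by-operation bookkeeping; the paper's route does not need the domain of $f$ to be finite and yields, in principle, an explicit (if crude) radius rather than a bare existence statement.
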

\begin{proof}
Let the parameter vector $\w$ represent an LSTM which computes $f$. It suffices to show that there exists some ball $B = B_r(\w)$ with radius $r$ such that all LSTMs in $B$ compute $f_i$ as well.

We first focus on the production of the preactivations of the output. This is done by some matrix and vector  $\y = \W_o\z + \bb$ with their elements taken from $\w$. By assumption, the outputs after activation are in the saturated part of the linear sigmoid. We can put them \emph{strictly} inside the saturated part with the operation $y' = 3y - 1$, or $\y = 3\W_o\z - {\mathbf 1} + \bb$. So, by multiplying all elements of $\W_o$ by 3 and subtracting $1$ from all elements of $\bb$, we have another LSTM $\w'$ whose outputs are strictly inside the saturated part of the output, at least 1 unit away from the edges. 

This means that all pre-activations in the ball $B_1(\y)$ also result in the correct output. We can now work backward through the network to establish which perturbations---expressed as the Euclidean distance between the original and perturbed output---we can allow for both the parameters of the operation and the inputs in order that the perturbation on $\y$ stays within the ball. For our purposes, it suffices to show that these perturbations are always nonzero. 

The \textbf{sigmoid and tanh activations} are $1/4$ and $1$ Lipschitz respectively. For a maximum output perturbation of $p$ we should allow a maximum input perturbation of $4p$ and $p$ respectively. 

The \textbf{addition of the bias vector}: for a maximum output perturbation of $p$ we must ensure that the input perturbation and the bias parameter perturbation do not add up to a vector longer than $p$, which we can ensure by limiting each to length $p/2$.

For any \textbf{matrix multiplication} $\W\x$, apply a perturbation of $\bE$ to the matrix and $\e$ to the input. The perturbation on the output is 

\begin{align*}
\| (\W+\bE)(\x + \e) - \W \x \| &= \| \W\e + \bE \x + \bE\e\|  \\
&\leq \| \W \| \|\e\| + \|\bE\| \|\x\| + \|\bE\| \|\e\|
\end{align*}
where the last line uses the triangle inequality and the submultiplicativity of the matrix norm. We use the Frobenius norm for matrices and the Euclidean norm for vectors. 

Note that all inputs to all matrix multiplications in the LSTM are bounded (our input vectors are binary vectors, and the inputs to any matrix multiplication are passed through a tanh or sigmoid activation). This means that there is an $m$ such that $\| \x\| < m $. We can now restrict the allowed output perturbation\footnotemark to $p < (m + \|\W\| + 1^{-1}$ and $p < 1$ and choose parameter perturbation $0 < \|\bE\| \leq p$ and input perturbation $0 < \|\e\| \leq p$.

\footnotetext{This requires us to lower the allowed output perturbation, but only to another non-zero value.}

 which gives us:
\[
\| (\W+\bE)(\x + \e) - \W \x \| < \|\W\| p + p m + p^2 < p(m + \|\W\| + 1) = p \p 
\]

We have shown that we can apply non-zero perturbations to every element of $\w'$ while keeping the behavior of the LSTM the same. If we take the smallest allowed perturbation $p$ encountered and apply it to all dimensions, we see that all LSTMs in the ball $B_p(\w')$ compute $f$. 

This ball receives a non-infinitesimal probability under any non-degenerate Gaussian, which proves the lemma. 
\end{proof}

Now let $C_\text{LSTM}$ be the class of Turing machines computing a forward pass of an LSTM. Specifically, for $c \in C_\text{LSTM}$, $c(x, y)$ computes the forward pass of an LSTM on the binary input matrix in column-major ordering $y$ on the conditional tape, ignoring any random bits on the input tape $x$, producing the output as described above in column-major ordering.\footnotemark

We assume that any non-bit outputs of $c \in C_\text{LSTM}$ are rounded to the nearest bit so that the input and output are always in $\B$.

\footnotetext{There are two levels of simulation here: each element of $C_\text{LSTM}$ is a Turing machine simulating an LSTM. Some small proportion of these compute $f$, which simulates a single forward pass of some chosen Turing machine.}

We assume a normal distribution as initialization, and sufficient floating point precision to ensure that $f$ is included in $C_\text{LSTM}$. With this construction, we can show that iterating the right model class converges to a universal class. 

Let $r$ be a readout function $x = r(\cdot, y)$ which takes a matrix bitstring representation $y$ of the four TM tapes (as defined above) and extracts the bits $x$ on the output tape. Similarly, let $s$ be a setup function $x = s(\cdot, y)$ which takes a string $y$ and turns it into a four-tape representation with $x$ on the input tape and all other tapes empty. 

\begin{theorem}\label{theorem:universality}
Let $m^n_\text{UTM}(x)$ be the distribution defined by running a universal Turing machine (UTM) on a random input for $n$ steps, and observing the output $x$. If $m_C$ dominates $C_\text{LSTM}$ and $r, s\in C$, then $m^{n+2}_C$ dominates $m^n_\text{UTM}$. 
\end{theorem}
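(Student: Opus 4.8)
The plan is to realize one step of a universal Turing machine as a single forward pass of an LSTM (which lives in $C_\text{LSTM}$, hence is dominated by $m_C$), and to absorb the bookkeeping of converting between the "raw bitstring" representation used by $m^n_\text{UTM}$ and the "four-tape configuration" representation used by the LSTM step function $f$ into two extra iterations, one at the start (using $s$) and one at the end (using $r$). This is why the statement loses two levels in the hierarchy: $m^{n+2}_C$ rather than $m^n_C$.

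First I would set up the correspondence between the two processes. Sampling from $m^n_\text{UTM}$ means: put random bits $x$ on the UTM's input tape, run for $n$ steps, read off the output. By the construction in the previous subsection, each single step of this UTM is computed by some fixed function $f$ of the four-tape-plus-heads-plus-state configuration, and there is an LSTM (hence an element $c_f \in C_\text{LSTM}$) computing $f$. So the key identity is that the $n$-fold composition $f \circ f \circ \cdots \circ f$ applied to the initial configuration $s(\cdot, x)$, followed by $r$, reproduces $m^n_\text{UTM}$ exactly. In the $m_C$ iteration, each layer applies a \emph{randomly sampled} $c \in C$ (conditioned on the previous output via the conditional tape), not the fixed $c_f$; the point of Lemma (the ball lemma) together with the class-domination hypothesis is that $m_C$ assigns non-infinitesimal probability to selecting exactly $c_f$. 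Concretely, by the structure of $m_C^{k+1}(x) = \sum_{u,c} m_C^k(u)\,p(c)\,p_c(x\mid u)$, restricting the inner sum to the single term $c = c_f$ gives $m_C^{k+1}(x) \geq p(c_f)\, (\text{$f$-push-forward of } m_C^k)(x)$, and $p(c_f) \geq \varepsilon > 0$ is a constant independent of $x$ by the ball lemma. Iterating this observation $n$ times through the $f$-layers, and once each for the $s$-layer and the $r$-layer (using $r, s \in C$, so $p(r), p(s) > 0$), yields $m_C^{n+2}(x) \geq p(s)\,p(r)\,p(c_f)^n \, m^n_\text{UTM}(x)$, i.e. domination up to the multiplicative constant $p(s)\,p(r)\,p(c_f)^n$ — which, crucially, depends on $n$ but not on $x$, and that is exactly what domination of $m^n_\text{UTM}$ (for fixed $n$) requires.

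The chain of inequalities I would write is therefore: start from $m_C^{n+2}(x)$, peel off the outermost layer restricting to $c = r$ to get $\geq p(r)\cdot(\text{readout-push-forward of } m_C^{n+1})$; then inductively peel off $n$ layers each restricted to $c = c_f$, picking up a factor $p(c_f)$ each time and replacing the distribution by its $f$-push-forward; then peel off the innermost layer restricted to $c = s$, picking up $p(s)$ and leaving $m_C^0 = p_u$ on the raw input; finally observe that $p_u$ is uniform on each length class so it agrees (up to the conventions already fixed) with feeding the UTM uniform random input bits, and that $r \circ f^{\,\circ n} \circ s$ applied to this is precisely the sampling procedure defining $m^n_\text{UTM}$. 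One has to be slightly careful that the conditional-tape plumbing in the iteration definition matches "apply $c$ to the previous output as conditional input" — but that is exactly how $m^{n+1}_C$ was defined in the previous subsection, so no new work is needed there.

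The main obstacle I expect is \textbf{not} the probabilistic part (that is handled cleanly by the ball lemma) but the \emph{representation-matching} part: ensuring that the input/output encodings line up at the seams between $s$, the $f$-layers, and $r$. In particular, $f$ operates on a fixed-format four-tape encoding (one-hot symbols, head-position bits, state in the hidden vector), whereas the iteration passes bare bitstrings on the conditional tape; $s$ must produce exactly the encoding $f$ expects as its configuration-at-step-$0$, each $f$-layer must output an encoding that the next $f$-layer accepts verbatim as input, and $r$ must strip this back to just the output-tape bits. There is also the subtlety that the UTM's work/output tapes are two-way and unbounded while the LSTM processes a finite padded sequence, so one must argue the padding is always wide enough — this is the same "sufficient number of compute tokens / sufficient width" caveat already flagged in the footnotes of the previous subsection, and I would simply invoke it. As long as one fixes these encoding conventions consistently (and notes that the rounding-to-nearest-bit convention on $C_\text{LSTM}$ outputs does not corrupt the one-hot/head-bit structure, since $f$'s exact pre-activations put everything strictly in the saturated region), the proof reduces to the short inequality chain above.
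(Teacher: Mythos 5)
Your proof is correct and follows essentially the same route as the paper's: peel off the outermost iteration restricted to the readout $r$, then $n$ iterations each restricted to the LSTM $c_f$ that computes one step of the UTM, then the innermost iteration restricted to the setup $s$, collecting the constant $p(r)\,p(c_f)^n\,p(s)$ and landing on the uniform base distribution $p_u = m_C^0$, after which the determinism of $r \circ f^{\circ n} \circ s$ collapses the sum to exactly $m^n_\text{UTM}$. Your extra attention to the representation-matching seams (what $s$ must emit, what $f$ expects and re-emits, what $r$ strips off, and why the rounding/saturation conventions preserve the encoding) is a genuine subtlety that the paper's proof leaves implicit rather than argues.
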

\begin{proof}
\begin{align*}
m^{n+2}_C(x) &= \sum_{c, u_0} m_C^{n+1}(u_0) p(c) p_c(x \mid u_0)\\
&\geq \sum_{u_0} m_C^{n+1}(u_0) p(r) p_r(x \mid u_0) \\
&= \sum_{u_0} p(r) p_r(x \mid u_0) \sum_{c, u_1} m_C^{n}(u_1) p(c) p_c(u_0 \mid u_1) \\
&\geq \sum_{u_0, u_1} p(r) p(f) p_r(x \mid u_0) p_f(u_0 \mid u_1) m_C^{n-1}(u_1) \\
&\geq \sum_{u_0 \ldots u_n, u_{n+1}} p(r) p(f)^n p(s) p_r(x \mid u_0) \left(\prod_{t=1}^n p_f(u_{t-1} \mid u_t)\right) p_s(u_n, u_{n+1}) \;\; p_u(u_{n+1}) \p \\
\end{align*}

Note that $p_f$, $p_r$ and $p_s$ are deterministic, giving probability $1$ to one specific output for a given conditional. As such, from $u_{n+1} = u$, only one sequence has non-zero probability, so that we can rewrite

\begin{align*}
m^{n+2}_C(x) &\geq p(r)p(i)^np(s)\; \sum_{u} p_u(u) p_{UTM}^n(x \mid u) \\
&= p(r)p(f)^np(s)\; m^n_\text{UTM}(x) \\
\end{align*}

where $p_{UTM}^n(x \mid u)$ is the probability of observing $x$ on the output tape of the UTM after placing $u$ on the input tape and letting it run for $n$ steps.
\end{proof}

Note that as $n$ increases, the constant of domination $p(r)p(f)^np(s)$ decreases exponentially. This means that as $n \to \infty$, the constant becomes infinitesimal. Thus, while we dominate we resource-bounded $m^n_UTM(x)$ for any bound $n$, we cannot say that in the limit $m^\infty_C(x)$ dominates the unbounded universal distribution $m(x)$. 

For such a result, our LSTM would need to be able to simulate an arbitrary number of steps of the UTM rather than one step. This is possible \cite{chung2021turing} but requires arbitrary memory and time. The above result shows that even with a very modest model class---whose individual members are time- and memory-bounded and have finite precision---we can still approximate the universal distribution, in this specific sense.

\section{In practice}

To put these ideas in practice, we sample a random LSTM, $c$, mostly by standard initialization, and sample from it autoregressively while conditioning on a separate sequence of random data $z$. By the notation introduced above, the resulting sequence $x$ is sampled from $m^1_\text{LSTM}$ if $z$ is drawn from a uniform distribution, and $m^n_\text{LSTM}$ if $z$ is drawn from $m^{n-1}_\text{LSTM}$. Note that a new LSTM should be sampled for each $x$.

We then train a standard autoregressive transformer model \cite{radford2018improving} on sequences sampled in this way. 

Instead of training on sequences of bits, we use a token space of 256 characters. This makes it easier to test on downstream tasks by representing these in any type of (extended) ASCII encoding. 

For ease of implementation we limit ourselves to sequences of a fixed length (set to 512 in all experiments). This makes memory use predictable. 

\subsection{Source model}

For our model class we use single layer LSTMs with a hidden size $d_h$. We use the standard initialization (from $U(-h^{-\frac{1}{2}}, h^{-\frac{1}{2}})$) and multiply the weights (but not the biases) after initialization by a random multiplier drawn from $U(0, 1.1)$ for each model. We project the LSTM outputs to 256 dimensions with a linear layer and apply a softmax function. We add an embedding layer also with dimension $d_h/2$ for the inputs, which is initialized from a normal distribution.

We provide the LSTM with a seed $s$ (of 8 tokens in all our experiments) and a conditional sequence of $n$ tokens. We sample autoregressively, concatenating the current input sequence and the corresponding part of the conditional sequence after embedding.

For efficiency and ease of implementation, we initialize a single model $c$, and sample a batch of instances from it. This means that the batch is \emph{not} independently sampled from $m_C$. Training directly on such batches would create a bias at each gradient update for the patterns that are specific to $i$. 

To approximate independent sampling, we keep a buffer of $b$ samples. This is initially filled with random noise: half with \emph{constant} sequences (which repeat a single randomly chosen character) and half with \emph{noise} sequences (which sample each token independently, uniformly random).

Each iteration, we take $n$ instances $z$ and seeds $z$ from the buffer and replace them by samples from the current LSTM, conditioned on $z$ and seeded with $s$. For the seeds we slice 8-token subsequences from a random place in the sequence. We then sample another $n$ random instances from the buffer for our target model to train on. 

While this does not fully ensure that the samples are independent, it does ensure that with high probability, each instance in a training batch was generated from a different model $c$.

Moreover, because we are sampling structured noise, and feeding it back into the data, we are approximating a sample from a mixture $m^\text{mix}_C$ over the infinite sequence of models $m^1_C, m^2_C\ldots$, at the cost of a single sample from $m^1_C$.\footnotemark~

We call the distribution we are thus sampling from $m^\text{mix}_C(x)$. $m_C^\text{mix}$ dominates any $m_C^n$ albeit with a constant of domination that vanishes exponentially with $n$. The ablation in section \ref{section:ablation} shows that this resampling from the buffer has a beneficial effect.

The complete algorithm is detailed in Algorithm~\ref{algorithm:buffer}.

\begin{algorithm}

\begin{alltt}\ttfamily

\textbf{function} rsequences(len, con, ran):
\tab \textit{# Sample a set of random sequences}
\tab \textit{# len: sequence length, con: nr of constant seqs, }
\tab \textit{# ran: nr of random seqs}

\tab result \is [] 
\tab result.append(\textit{`con` constant repetitions of a random token})
\tab result.append(\textit{`ran` sequences containing random tokens})
\tab \textbf{return} shuffle(result)

buffer = rsquences(n, buffer_size/2, buffr_size/2)

\textbf{function} sample_batch(bs, reset):
\tab \textit{# Sample a training batch }
\tab \textit{# bs: batch size, nr of instances to reset}

\tab c \is LSTM(\ldots) # initialize model

\tab \textit{# sample instances to enrich}
\tab idx \is  sample(bs, buffer_size) # sample bs indices
\tab seqs  \is buffer[idx]
\tab seeds \is \textit{`bs` subsequences from anywhere in the buffer}

\tab \textit{# sample sequences from LSTM}
\tab seqs \is autosample(c, seeds=seeds, conditional=seqs) 

\tab reset_idx \is sample(reset, bs)
\tab seqs[reset_idx] \is rsequences(reset/2, reset/2)

\tab buffer[idx] \is seqs

\tab \textit{# sample batch}
\tab idx \is sample(bs, buffer_size)
\tab \textbf{return} buffer[idx]

\end{alltt}

\caption{Pseudocode for the buffering algorithm}
\label{algorithm:buffer}	
\end{algorithm}

On these samples, we train a standard autoregressive transformer (architecture and training details in the appendix) by shifting the sequences one character to the left to form the target, so that the model is trained to predict the next character at each position, with negative log loss as the training objective. As shown in \cite{muller2021transformers}, this objective is minimized when the transformer's distribution over sequences coincides with our prior, $m^\text{mix}_C(x)$.

\section{Experiments}

\subsection{Downstream data}

We evaluate on various synthetic sources of data. Each consists of a generator that produces strings of ASCII characters  (mapped to integers in $[0, 255]$), possibly of variable length.  We sample sequences and optionally concatenate them with a single delimiter character ``\texttt{|}'' until the total exceeds 100\h000 characters.

Note that the model is always over all 256 tokens, so that its first task is to home in on the characters used by the specific generator.
 
To provide a reasonable bar to test basic universal pre-training, we design the datasets around an intuition of what a Markov model---either trained on a large dataset from this source, or in-context from a short, specific sequence---should be able to achieve. Our first aim is to show that our model can, at least in some instances, do better than a Markov model. 

We treat some of these as validation sets (v), on which we tune our hyperparameters, and some as test (t), only evaluated in the final experiments. 
 
\begin{description}
\item[\texttt{champ} (v)] Inspired by the Champernowne constant,\footnotemark~this source generates successive integers, and outputs their concatenated digits as tokens. We sample a starting integer at random from $[0, 16777216-n]$ and then concatenate it together with the subsequent 255 integers in a string, the characters of which are tokens. The idea behind this dataset is that there is a simple computational structure that allows good predictions, but a simple, low-order Markov model trained on the training data will not be able to exploit this structure (and an in-context Markov model only partially).
\item[\texttt{dyck} (v)] Generates \emph{Dyck words}: that is, sequences of opening and closing brackets, such that each opening bracket is closed with no closing brackets left over. The result is that a simple predictor can get good performance by always predicting ``('' or ``)'' with equal probability and ``\texttt{|}'' with small probability (as a $0$-order Markov model would do), while a perfect predictor would only assign non-zero probability to ``\texttt{|}'' only when the preceding string forms a Dyck word. 
\item[\texttt{ndfa} (v), \texttt{aut} (t)] A string generated from a simple, non-deterministic automaton, described in Figure~\ref{figure:sources}.
\item[\texttt{toy} (v), \texttt{toy2} (t)] A string generated from a simple toy grammar, described in Figure~\ref{figure:sources}.
\item[\texttt{bits}] Each string consists of two random strings of 5 bits, followed by the result of applying the boolean operators xor, $\wedge$, $\vee$ and $=$ respectively for a sequence length of 30 bits. The first 10 of these are fully random, while the last 20 are fully determined by the first 10. Including the delimiter, a perfect predictor, given sufficient context would get a performance of $10/31 \approx 0.32$ bits.
\item[\texttt{bitsflip}] Repeatedly: a random string of 6 bits followed by the same string reversed, with no delimiter. Can be optimally predicted in 0.5 bits/char.
\end{description}

\footnotetext{$0.\rcl{1}2\rcl{3}4\rcl{5}6\rcl{7}8\rcl{9}10\rcl{11}12\rcl{13}\ldots$ One important property is that no $n$-gram occurs more frequently than any other in the decimal expansion. That means a Markov model would not be able to predict the next character better than chance. However, there is a simple program that can perfectly predict the next character, given the natural ordering of the digits.}

\begin{figure}[t!]

  \centerline{
  \includegraphics[width=1\textwidth]{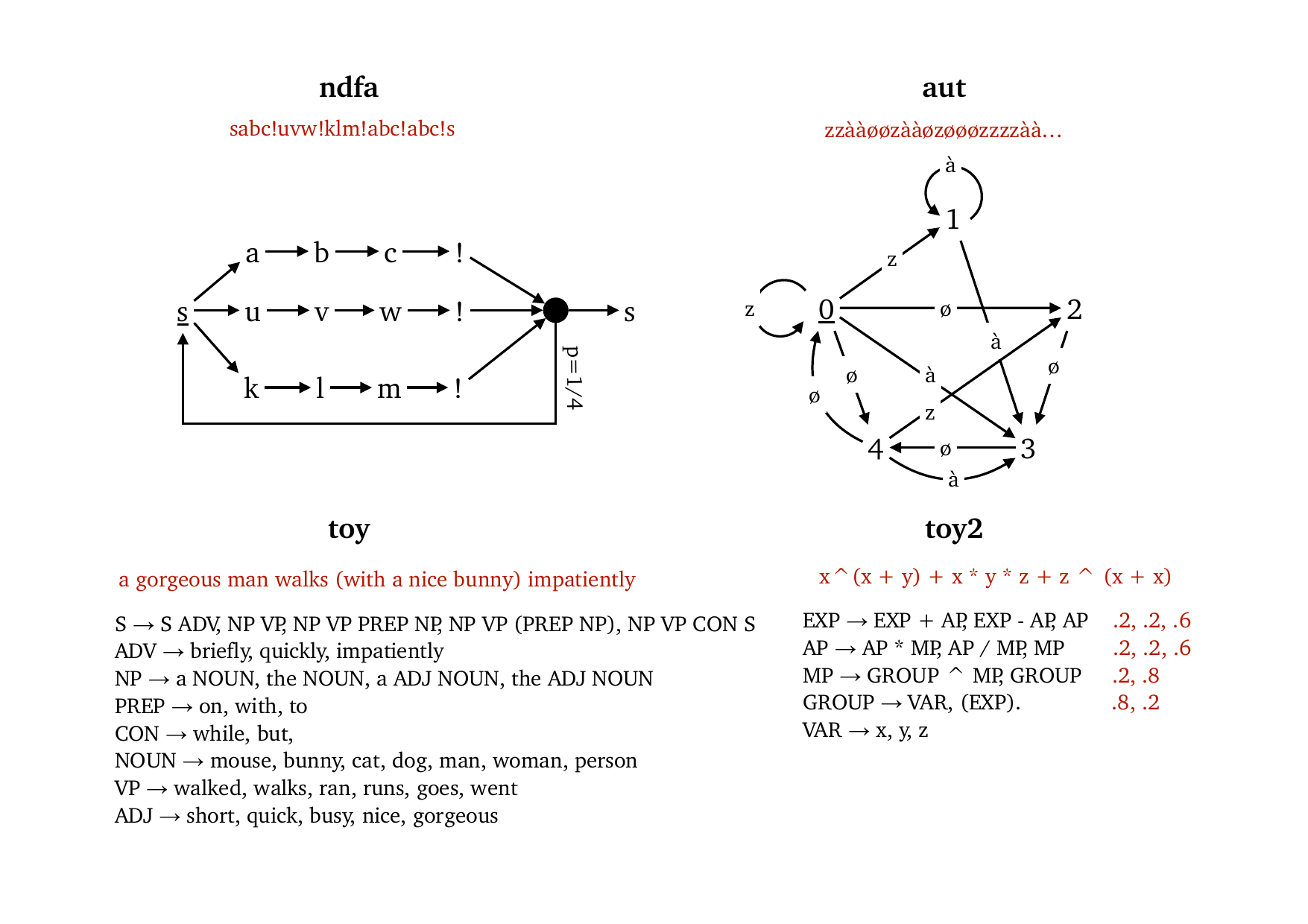}
  }
  \caption{The four toy data sources used. Example outputs are shown below the name in red. (ndfa) A simple hand-designed non-deterministic automaton (drawn in non-standard manner for the sake of simplicity). (aut) A randomly generated non-deterministic automaton. Transition probability are uniform. (toy) A hand-designed toy grammar. (toy2) A toy grammar adapted from \cite{heckendorn2021practical}. Probabilities in red are over the symbol replacement options (uniform if not specified). }
  \label{figure:sources}
\end{figure}

In addition, we include real-world datasets. Each comes with a validation/test split.

\begin{description}
\item[\texttt{wp} (v/t)] The validation and test set of the original Hutter prize Wikipedia data \cite{hutter2006hutter} (also known as \texttt{enwik8}). The data is loaded as a sequence of bytes, so that some special characters are represented as two tokens. 
\item[\texttt{german} (v/t)] The text of the public domain book \emph{Die Frauenfrage} \cite{braun1901frauenfrage}, chosen for containing a relatively large amount of structured text (in the form of ASCII formatted tables) in addition to natural language. We used the text-file version from Project Gutenberg, removed the Gutenberg pre- and post-matter and loaded the text as a sequence of bytes.
\item[\texttt{code} (v/t)] The minified javascript code of the D3.js library (version 7.9.0) \cite{d3js}. Chosen for containing a reasonable amount of code, but (as a javascript project) being easily available in minified form so that a larger amount of structure will be contained in the relatively modest context length of our models. Contains no natural language comments.
\item[\texttt{linux} (v/t)] The first 25 megabytes of the linux kernel codebase (commit \texttt{dd83757f6e}) \cite{linux}. Chosen to allow us to test finetuning on a coding dataset of similar size to the wikipedia data. Contains substantial amounts of natural language comments in addition to the \texttt{C++} code.
\end{description}

\noindent We estimate the binary negative log-likelihood of a given model on this data. The characters are mapped to the tokens of the model by their ASCII code.\footnotemark~We sample 10\h000 slices of length $n$ (the context length) from the data and average the negative binary log-probability that the model assigns to the character in the last place of the sequence. Note that we treat the whole data as a single string, so that delimiters are included (and may end up anywhere in the sampled slice).

\footnotetext{This mapping is arbitrary, since the model has not been trained on this data.}

\subsection{Zero-shot performance and scaling}

\label{section:scaling}

\begin{figure}[t!]

  \centerline{\hspace{-1.2em}
    \includegraphics[width=1.6\textwidth]{./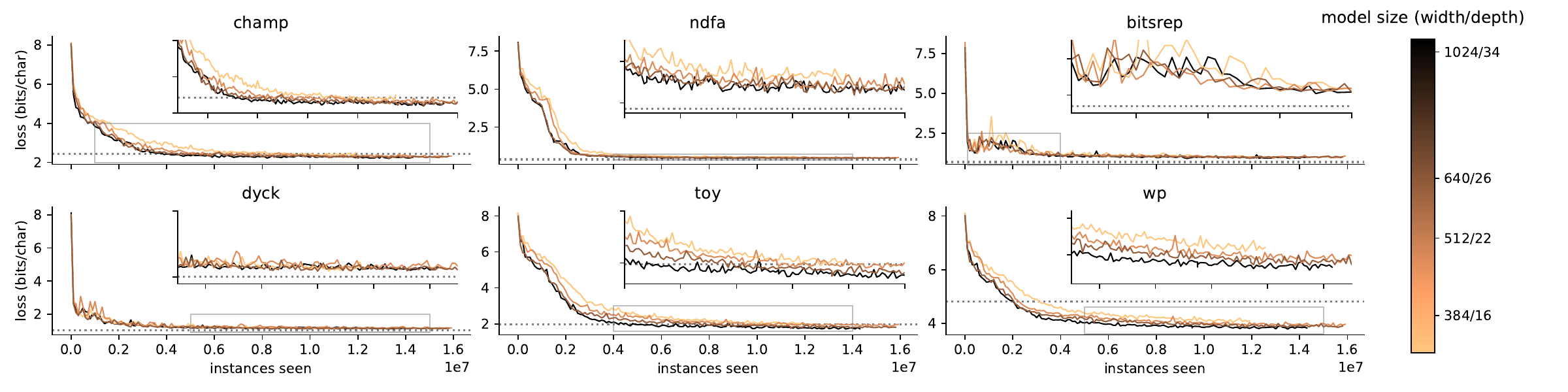}
  }
  \caption{The results of the main experiment (Section~\ref{section:scaling}) on the validation datasets.}
  \label{figure:scaling-val}
\end{figure}

We train models of increasing size to study scaling behavior. To scale up the model we first set the width $w$ (also known as $d_\text{model}$, the dimension of a token representation going into each transformer block). We then choose the depth--the number of transformer blocks---$L$ as suggested in \cite{levine2020depth} by the formula $L = \lfloor\frac{\ln w - 5.039}{0.555}\rfloor$. We use maximal update parametrization (MUP) \cite{yang2021tuning} so that we can tune the hyperparameters at $w=384$ and adapt the larger models without tuning. We scale the width of the LSTM model proportional to the width of the target model.

We keep the context fixed at $512$ for all models\footnotemark and use $w/128$ attention heads. Further architectural and training details are given in the appendix. 

\footnotetext{The limited context length may be responsible for a slight plateauing effect in the scaling, since larger contexts may be required to scale up prediction with the model. We leave this to future work.}

We pre-train on approximately 20M instances sampled by the method detailed above. At every 100K instances, we evaluate the model (without any training) on all downstream datasets.

Figures \ref{figure:scaling} and \ref{figure:scaling-val} show the results on the test and validation sets respectively. The dotted line shows the performance of a $k$-th order Markov model, for the optimal $k$ and Laplace smoothing parameter $\lambda$ chosen for the data by exhaustive tuning (over orders 0-5 and smoothing values $1$, $0.1$, $0.01$, $0.0001$, $10^{-6}$). The Markov model is trained on the same context as the model is given. The hyperparameters are chosen once per dataset.

The results show that the model improves substantially over a random baseline (which would score 8 bits per token on all data). It does not outperform the Markov model on most of the synthetic datasets, but it does do substantially better on the real-world datasets. This shows promise for the ideal of universal pre-training in real-world models, while also showing that there are still many challenges left to overcome. 

Among the synthetic datasets, the model does dip slightly below the Markov boundary for the \texttt{champ} and \texttt{toy} datasets. 

In addition, we observe improvement with scale in the real-world datasets. This suggests a possibility of a scaling law for universal pre-training, which we discuss further in the last section.

\subsection{Finetuning}

\label{section:finetuning}

\begin{figure}[t!]

  \centerline{\hspace{-1.2em}
    \includegraphics[width=1.6\textwidth]{./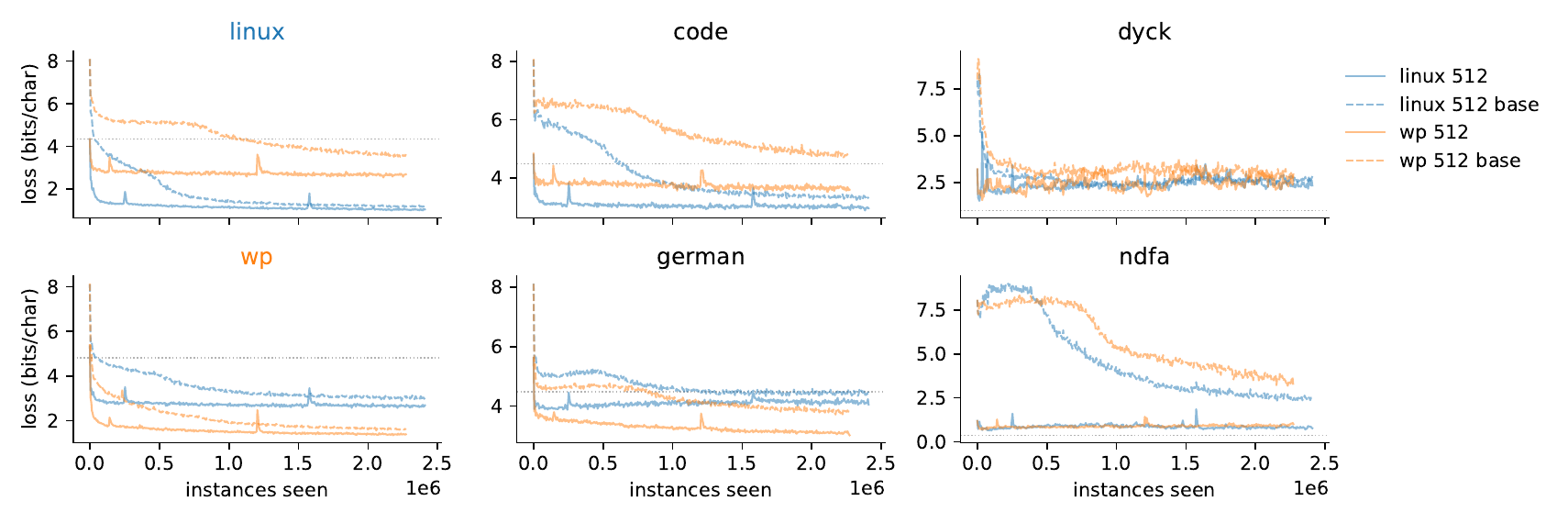}
  }
  \caption{The results of the finetuning experiment. Colors indicate the dataset that was finetuned on (wikipedia or linux). Dashed lines show baselines (the same model, but with the parameters re-initialized). Note that as finetuning progresses, performance on other data is often largely retained. 
  }
  \label{figure:finetuning}
\end{figure}

Next, we test the effect of taking one of our models and further finetuning it on non-synthetic data (specifically the \texttt{wp} and \texttt{linux} training datasets). We find the largest model on which we can finetune in a straightforward manner has $w=512$.\footnotemark~We take the final model at the end of the training run from Section~\ref{section:scaling} and tune it further on the specific training dataset. For a baseline, we re-initialize the model parameters using the same scheme used for pretraining. We choose the highest learning rate ($10^{-4}$) for which the baseline does not destabilize and use it for all models. 

The results are shown in Figure~\ref{figure:finetuning}. We first observe that universal pre-training confers an advantage in the speed of convergence. Both models converge to the a similar optimum, but the pretrained model gets there about 1M instances faster than the baseline.

\footnotetext{The $w=1024$ model trains well initially, but has a certain probability of destabilizing. Most likely this can be solved with better hyperparameter tuning or adapters, but resource limitations require us to leave this question to future work.}

Of course, compared to the baseline, the pre-trained model has the benefit of training on aroudn 20M synthetic instances which we do not include in the comparison. This is justified in a setting where we can amortize the pre-training over many downstream tasks, making its cost arbitrarily small. 

We find that with a higher learning rate ($3\cdot 10^{-4}$), the baseline and pre-trained model converge to exactly the same value, but both models occasionally destabilize. This suggests that while in the shown experiment, the pre-trained model and the baseline never quite touch, this gap may disappear with more careful finetuning. 

Beyond the faster convergence, we see that the pre-trained model largely retains its performance on datasets other than the finetuning target. This suggests a \emph{generalization} benefit: a pre-trained model, finetuned on in-domain data, may generalize better to out-of-domain data, at least to the extent that the pretraining distribution covers the out-of-domain datasets. 

Exactly how much generalization benefit one gets in practice, and how well this works for practical generalization problems, we leave to future work. 

Finally, we note here that the Wikipedia and Linux datasets themselves appear to confer some pre-training benefit on the other datasets. This is likely down to the wide variety of structures present in the data beyond simple natural language, making these sources something of an approximation to the universal distribution themselves. 

\subsection{Ablation}
\label{section:ablation}

Finally, we ablate a selection of design choices made in our final experiment. The results are shown in Figure~\ref{figure:ablation}.

\begin{figure}[t!]

  \centerline{\hspace{-1.2em}
    \includegraphics[width=1.6\textwidth]{./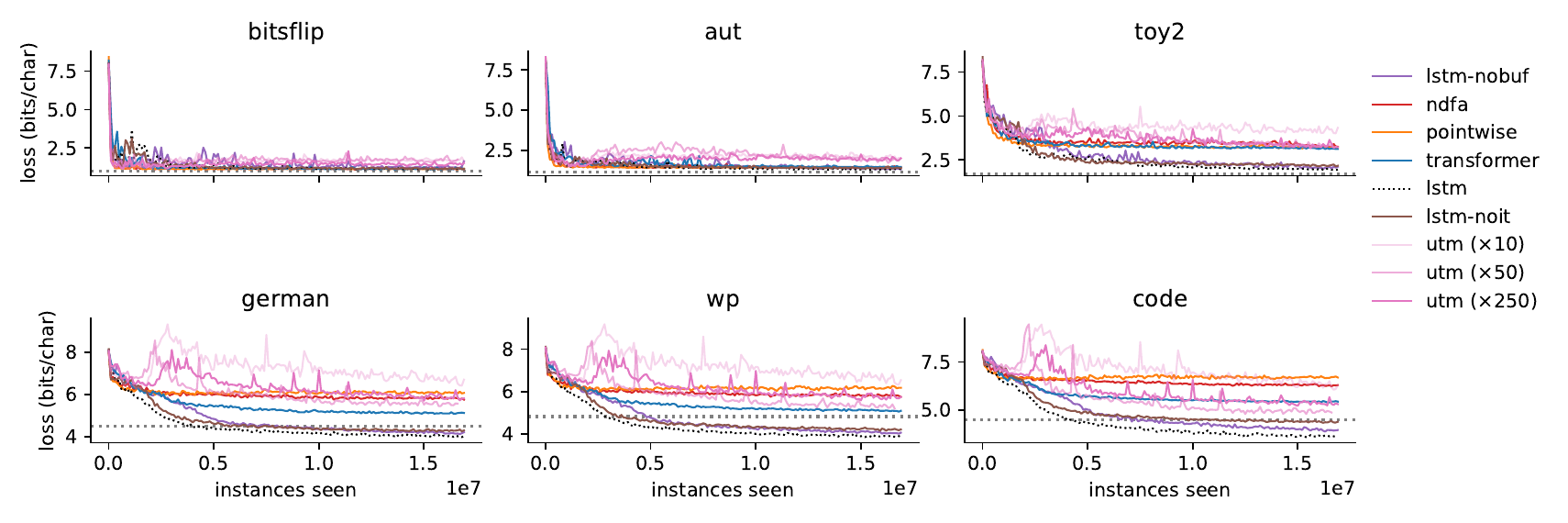}
  }
  \caption{The results of the ablation experiment (Section~\ref{section:ablation}). Starting with the base experiment for $w=384$ (in black, dotted), we try three alternative source of structured random data: a \texttt{transformer} model, a random deterministic automaton (\texttt{ndfa}), and a random distribution on the character space, iid over the time dimension (\texttt{pointwise}). In addition, we try our LTSM source without buffer (\texttt{lstm-nobuf}) and without iterating the model to increase the depth of the structure present (\texttt{lstm-noit}).. 
  }
  \label{figure:ablation}
\end{figure}

%


First, we train on three further sources of random data: 
\begin{description}
	\item[pointwise] For each instance, we sample a categorical distribution over the tokens from a Dirichlet prior with $\alpha=\frac{1}{2}$. We then sample each token independently from this distribution. 
	\item[ndfa] A randomly generated non-deterministic automaton. 
	\item[transformer] We sample from a transformer model in the same way we sample from the LSTMs. While this creates a pleasing architectural symmetry between the source and target model, the transformer relies on learned embeddings for its sequential bias, which makes it very unlikely that this bias emerges by random sampling.
\end{description}
	
\noindent In addition, we ablate the buffering mechanism (\texttt{lstm-nobuf}) by making the buffer the same size as the batch size, and the iteration of generated data back into the lstm (\texttt{lstm-noit}), by resetting the sampled batch to half constant/half uniform noise before feeding it to the LSTM.

Finally, we test on the explicit resource-bounded UTM used in \cite{grau2024learning} as a source. We scale the resource bounds (on memory, program length, and time) up by factors of 10, 50, 250. At 50, the UTM takes slightly less time per sample than the LSTM and at 250 slightly more.

It may be that a large reason for the difference in preformance between the LSTM and the UTM is down to the fact that the LSTM can be accelerated by GPU, and so gets more computation for the same amount of time. It may also be that its natural inductive bias (the prior it assigns to functions) is better suited for natural language data like \texttt{wp}.

It may be that mixing sources together---for example feeding the output of a UTM to an LSTM---provides the best of both worlds. The mechanism of Section~\ref{section:theory} provides a simple opportunity for this. We leave such investigations to future work. 

\section{Discussion}

There are three main ingredients in the modern machine learning process that limit how well we do: the fundamental abilities of the algorithm, the available amount of data and the available amount of computational resources. This paper shows that, at least to some extent, it is possible to trade off computational resources for data. We can generate structured random data to pre-train our models, making them require less real-world training data by comparison.

Moreover, the pre-training is in some sense universal. That is, the same pre-training can benefit a large amount of downstream tasks. This allows us to amortize the pre-training over many different tasks, reducing the cost of the tradeoff.

\subsection{Limitations and Future Work}

\paragraph{Universality of assumptions} Calling the specific approach used in our experiments \emph{universal} is a slight overstatement. We do not sample from a universal distribution (which is not tractable). A class-universal distribution with a sufficiently broad class (like the polynomial-time class $P$) would still justify the name universal pre-training (since it's very likely that all data sources are in that class), but we fall short of that as well.

First, in the interest of efficient generation, our class $C$ is limited to finite strings. While the iteration trick allows arbitrary sampling depths (in theory), this does not necessarily mean that we are sampling from the distribution on arbitrarily deep LSTMs/TMs as we are in Theorem~\ref{theorem:universality}. First, because the probability of sampling at depth $n$ decays exponentially with $n$ in our algorithm and second because we limit the sequence-length to a fixed size.

In short, while the framework as we have described it, in its most general terms deserves the moniker \emph{universal}, our practical implementation of it is more of an approximation to this ideal. This is also reflected in our experimental results where simple tasks such as \texttt{bitsflip} and \texttt{toy2} are not solved by our model, while a truly universal model should have no problem with them.

We leave the notion of (more) true universal pre-training a a challenge to the community. This will likely require more investigation of models as well as more carefully designed benchmarks and experimental protocols to test that the model truly performs well on a variety of tasks on which it wasn't trained. We hope that the current work serves as a proof-of-concept that this is a worthwhile direction of investigation.

\paragraph{Other domains} Our experiments are limited to token sequences. That is, strings of tokens from a finite vocabulary. In principle, the same idea could be applied to computer vision, continuous time series and any other domains. Ideally, those domains would be unified in a single data representation so that one universal model may be applied to all.

\subsection{Social impact}

At the time of publication, large model development is experiencing explosive growth. Some commentators have already suggested that this is a bubble, driven by overoptimistic predictions of future payoffs. While energy use of individual models is still relatively modest, compared to other areas of scientific investigation, the race for larger and better models has already led to technology companies abandoning their promises of carbon neutrality \cite{milmo2024google} and investigating the use of nuclear power purely for the purpose of meeting the energy demands of AI \cite{luscombe2024three}. 

The limited availability of high-quality data functions as a limiter on this exponential growth. Training models beyond a certain size is no longer feasible, not because companies are not willing to invest the money and energy, but because the data is either not available, or it is too expensive to acquire at scale.

We have shown that, at least in principle, compute may be traded off against data. We stress that it remains to be seen how this effect plays out at scale, and it may never become economical to do so. However, if it does turn out to be a viable tradeoff, an important limiter on the explosive investment will disappear. 

We need only look at developments like crypto-currency to see how far people  will collectively go to invest in a suggested future reward, with amounts of energy at the scale of countries going into a technology that has far from proven its value let alone its return on investment \cite{kohli2023analysis}. In short, there is a risk of an even greater run on sources of energy for training AI models than we are already seeing, at a time when climate change dictates that we should rather be strictly scrutinizing our energy consumption.

On the other hand, universal pre-training may also provide a middle ground. Since a model trained in this way is guaranteed not to contain any sensitive data, or any data-driven social biases, and can be used in a variety of tasks, the training of such models may be centralized, with the result easily published under a free license. While the cost of training will still be substantial, it can then be amortized over all users, rather than just the use regulated and sold by a single company. So long as we can prevent a multitude of companies duplicating the same effort in a race to the biggest proprietary model, the net impact may be a positive, reducing the amount of energy the average user spends on training AI models.

\paragraph{Acknowledgments} Experiments for this research were run on the DAS-6 \cite{das6} and Snellius supercomputers.

\bibliographystyle{splncs04}
\bibliography{up.bib}

\appendix
\section{Appendix}

\subsection{Model and training details}

We trained the following models

\begin{tabular}{l l l l l}
width $w$ & depth & source width & target microbatch size\\
\hline
384 & 16 & 38 & 69 \\ 
512 & 22 & 51 & 48 \\ 
640 & 26 & 64 & 36 \\ 
1024 & 34 & 102 & 16 \\
\end{tabular}

\paragraph{Training} All experiments were performed on single A100 GPUs with 40 GB or VRAM, with the exception of the finetuning experiment for which H100 GPUs with 80 GB of VRAM were used. Training was completed in a single run of 5 days for the 384 and 512 width models, two such runs for the 640 model and three such runs for the 1024 model.

We tuned the model at $w=640$ and found a base learning rate of $3\cdot 10^{-4}$, which we transferred to the other models by MUP. We warm up the learning linearly over the first 1.6M instances and then cool down the learning rate geometrically so that it halves every 10M instances. We use the AdamW optimizer \cite{loshchilov2017fixing} with a weight decay of $0.01$. 

We clip gradients by scaling every gradient vector larger than 1 to norm 1. 

The maximum macrobatch size is 500 in all experiments. We start at a batch size equal to the microbatch size, and warm up linearly to the maximum macrobatch size between 500K and 1M instances. 

In the finetuning experiment we reduce the learning rate warmup to 10K instances and warm up the macrobatch between 50K and 150K instances, for both the baseline and the pre-trained model. The base learning rate (under MUP) is set to $10^{-4}$.

We set the buffer to 20 times the source batch size---the number of instances from the buffer passed through the a random LSTM for each iteration---which is 138 for all models. In each source batch, we reset 20 instances to constant random noise and 20 instances to random noise after feeding the instance through the LSTM. 

We train the model in mixed precision. Unless otherwise noted, default Pytorch initializations are used. 

\paragraph{Source architecture details} We use the default LSTM implementation from Pytorch with a single layer. The inputs are generated by an embedding layer (initialized from a normal distribution, and the outputs by a single linear layer with default initialization ($U(-d^{-\frac{1}{2}}, d^{-\frac{1}{2}})$ with $d$ the output size of the LSTM. 

We sample from the LSTM with temperature $10^{-4}$.

\paragraph{Target architecture details} Our transformer block consists of a standard self-attention followed by a layer normalization, followed by a feedforward network followed by a layer normalization, with residual connections around the self-attention and feed-forwards. We apply 0.1 dropout after each layer normalization. The feedforward has a single hidden layer which is 4 times the size of the input/output. It uses a ReLU nonlinearity.

In the self attention, we follow \cite{vaswani2017attention}, but scale by the dimension $k$ rather than its square root.

We use position embeddings and a plain linear layer as our output head.

\paragraph{Progressive scaling} When scaling up, we find it helpful to disable some blocks at the start of learning. To enable this we create \emph{progressive blocks}. These consists of a normal block, with a residual connection which multiplies the block's output by a learnable scalar $a$, initialized to zero. At the start of training, $a$ is frozen. At some point $a$ is unfrozen, allowing the block to begin to contribute as the magnitude of $a$ slowly increases under gradient descent. The first 8 blocks in any model are normal, with progressive blocks added in groups of 8 until the required depth is reached. During training, every 10K instances the next earliest group of progressive blocks that is currently disabled is enabled until the whole network is enabled.  

\end{document}